\documentclass[twoside,leqno,twocolumn]{article}  
\usepackage{ltexpprt} 

\usepackage{amsfonts}
\usepackage{subfig}
\usepackage{epsfig}
\usepackage{graphicx}
\usepackage{algorithmic}
\usepackage{amsmath,amssymb}
\usepackage{url}
\usepackage{algorithm}

\def\RR{{\mathbb R}}

\newcommand{\beq}{\begin{equation}}
\newcommand{\eeq}{\end{equation}}
\newcommand{\be}{\begin{enumerate}}
\newcommand{\ee}{\end{enumerate}}

\newcommand{\bi}{\begin{itemize}}
\newcommand{\ei}{\end{itemize}}
\newcommand{\bc}{\begin{center}}
\newcommand{\ec}{\end{center}}

\def\real{\hbox{\rm\setbox1=\hbox{I}\copy1\kern-.45\wd1 R}}

\begin{document}

\title{\Large Controlled Sparsity Kernel Learning\footnote{Work Done in the year 2011}}
\author{Dinesh Govindaraj \\ Bell Labs Research \\ Bangalore, India \\ dinesh.govindaraj@alcatel-lucent.com \\
\and
Raman Sankaran\\ Indian Institute of Science \\ Bangalore, India  \\ ramans@csa.iisc.ernet.in \\
\and
Sreedal Menon \\ Bell Labs Research \\ Bangalore, India \\ sreedal.menon@alcatel-lucent.com \\
\and
Chiranjib Bhattacharyya \\ Indian Institute of Science \\ Bangalore, India  \\ chiru@csa.iisc.ernet.in \\}
\date{}

\maketitle
 

\begin{abstract} \small\baselineskip=9pt 
Multiple Kernel Learning(MKL) on Support Vector Machines(SVMs) has been a popular front of research in recent times due to its success in application problems like Object Categorization.
This success is due to the fact that MKL has the ability to choose from a variety of feature kernels to identify the optimal kernel combination. 
But the initial formulation of MKL was only able to select the best of the features and misses out many other informative kernels presented. 
To overcome this, the $L_p$ norm based formulation was proposed by Kloft et. al. This formulation is capable of choosing a non-sparse set of kernels through a control parameter
$p$. Unfortunately, the parameter $p$ doesnot have a direct meaning to the number of kernels selected. We have observed that stricter control over the number of kernels
selected gives us an edge over these techniques in terms of accuracy of classification and also helps us to fine tune the algorithms to the time requirements at hand.
In this work, we propose a Controlled Sparsity Kernel Learning (CSKL) formulation that can strictly control the number of kernels which we wish to select. 
 The CSKL formulation introduces a parameter $t$ which directly corresponds to the number of kernels selected. It is important to note that a search in $t$ space is finite and
fast as compared to $p$. We have also provided an efficient Reduced Gradient Descent based algorithm to solve the CSKL formulation, which is proven to converge. Through our experiments 
on the Caltech101 Object Categorization dataset, we have also shown that one can acheive better accuracies than the previous formulations through the right choice of $t$.
\end{abstract}

\section{Introduction}
Support Vector Machines(SVMs) \cite{vapnik} have emerged as powerful tools for classification problems. The key to accurate classification using SVMs is the choice of 
\emph{Kernel functions}(for definition of Kernel function please see \cite{smola}). This issue was first studied in Lanckriet et. al. ~\cite{LaCrBaGhJo04} where the problem 
of Multiple kernel learning(MKL) was first introduced. They have been successfully applied to a variety of domains e.g. text, object recognition \cite{VaRa07, kumar07}, 
protein structures\cite{protein_mkl}. Even though the idea was to explore the space of all possible linear combinations of the specified kernels, 
the functional framework associated with it could only select the best kernel from the set of specified kernels.  
Recently, many other approaches have been proposed to overcome this limitation\cite{SaDiRa09, KlSo09}. While some of them select all the kernels and some have 
sparse solutions that choose a subset of the specified kernels in a weighted combination, none of them have explicit control over sparsity. 

Due to lack of explicit control, in many application scenarios Non-Sparse solutions end up selecting some bad kernels also which leads to reduction
in the discriminative power of the combination kernel. We show experimental evidence of this phenomenon. 
One might argue that if the kernels given were all good kernels, this problem will not persist. But that does not take away the fact that 
a lower-accuracy good kernel can still bring down the accuracy of a better kernel. In most of the recent publications, 
we do not get a glimpse of the original problem as the space of kernels explored is very small and most of the kernels have almost equal power of
representation.

While sparse solutions\cite{LaCrBaGhJo04, RaBaCaGr08} overcome this particular problem to an extent by having some inherent ability to select a 
combination of a subset of the specified kernels, once again, there is no way to control the sparsity of the solution. This inherits most of the 
problems of selecting one and selecting all kernels due to the lack of control. The most relevant problem is that it misses out on some important 
features by selecting lesser number of kernels than optimal. In the case of applications like Object Recognition, 
the necessity of Non-Sparse solutions have been brought to light \cite{eth_biwi_00649,SaDiRa09}. This is due to the fact that different 
kernels represent different features necessary for the task and dropping some of them or most of them will lead to a bad combination kernel. 
These flaws are shown in our experiments as well. 

This work builds a variable sparsity solution that has explicit control over the number of kernels selected overcoming all these problems. 
We show the effect and need of strict control of sparsity through our experiments on the application of Object Recognition. 
Along the way, we have also extended the MKL framework to nu-SVMs which allow us better control of the number of support vectors and training error as well.

In the following section \ref{sec:rel}, we present a review of the existing work on MKL. 
Section \ref{sec:bmkl} introduces the {\tt CSKL} formulation for the {\tt C-SVM} and {\tt $\nu$-SVM}. 
Section \ref{sec:algo} presents the algorithms to solve the proposed formulations. 
Section \ref{sec:expts} demonstrates the usefulness of the {\tt CSKL} formulation on a toy dataset and the Caltech101 real world Object Categorization dataset.

\section{Related Work}\label{sec:rel}
Multiple Kernel Learning(MKL) was initially proposed by Lanckriet et. al. ~\cite{LaCrBaGhJo04}. 
They introduced an Semi-Definite programming(SDP) approach to solve for the combination kernel. 
As SDP becomes intractable with increase in size and number of kernels, Bach et.al \cite{BaLaJo04} reformulated 
MKL by considering each feature as a block and applying the $l_1$ norm across the blocks and $l_2$ norm within each block.
For this formulation several algorithms\cite{SoRaScBe06,RaBaCaGr08,SaDiRa09} were proposed to speed up the optimization process. 
\cite{SoRaScBe06} provides an Semi-Infinite Linear Programming(SLIP) based algorithm which decreases the training time to large extent.
SimpleMKL \cite{RaBaCaGr08} proposed by Rakotomamonjy et.al. derived a formulation which is equivalent to the block $l_1$ norm based formulation and provided
 a Reduced Gradient Descent based algorithm that is faster than the SLIP algorithm proposed previously. 
The dual of the SimpleMKL formulation is given by,
\begin{eqnarray}\label{eqn:smkl_dual}
 \max_{\alpha_i}& \sum_i \alpha_i - \frac{1}{2} \sum_{i,j} \alpha_i \alpha_j y_i y_j \left( \sum_m \gamma_m K_m\left(x_i,x_j\right) \right) \nonumber \\
 \textit{s. t.} & \sum_i \alpha_i y_i = 0 \nonumber \\
& 0\le \alpha_i \le C \forall i \nonumber \\
& \sum_{m} \gamma_i = 1, \ \ \gamma_i \ge 0, \forall i 
\end{eqnarray}

While all these approaches discussed a sparse solution to MKL, on understanding the need for non-sparse solutions, researchers have been exploring the space of non-sparse formulations
in recent times.
To acheive non-sparsity, \cite{SaDiRa09} group the kernels and apply $l_{\infty}$ norm across the groups and $l_1$ norm within the groups. 
They have also proposed a Mirror Descent Algorithm for solving MKL formulations which is much faster than SimpleMKL. Especially when number of kernels are high.
Kloft et.al.\cite{KlSo09} apply general $l_p$ norm to kernels and they show that Non-Sparse MKL generalizes much better than sparse MKL. 
The dual of the $l_p$ norm based MKL formulation as proposed by Kloft et.al. looks like
\begin{eqnarray}\label{eqn:kloft_dual_1}
 \max_{\alpha_i}& \sum_i \alpha_i - \frac{1}{2} \left( \sum_m \left( \sum_{i,j} \alpha_i \alpha_j y_i y_j K_m\left(x_i,x_j\right) \right)^\frac{p}{p-1} \right)^\frac{p-1}{p} \nonumber \\
& \textit{s. t.} \sum_i \alpha_i y_i = 0 \nonumber \\
& 0\le \alpha_i \le C \forall i
\end{eqnarray}
They have shown that when $p=1$, the formulation is equivalent to SimpleMKL 
and as $p$ moves to $\infty$, it explores non-sparse solutions. But the value of $p$ lacks a direct meaning or implication to the number of kernels selected.


Even though the details of sparse and non-sparse solutions have been explored, none of these formulations have explicit control of sparsity for their solutions.
As we have demonstrated in the experiments section, strict control of sparsity is highly valuable. Hence we propose a formulation, 
where we can parametrically control the total number of kernels selected and an efficient reduced gradient descent based algorithm to solve it. 
We have also experimentally shown that our formulation will be able to better state-of-the-art 
performance on the Caltech101\cite{caltech101} dataset for object categorization through strict control of sparsity.

\section{Controlled Sparsity Kernel Learning}\label{sec:bmkl}
In this section we introduce the new Controlled Sparsity Kernel Learning (CSKL) formulation and prove that this formulation can explicitly control the sparsity 
of kernel selection through a parameter $t$. We derive the CSKL formulation by modifying the dual of MKL ~\cite{LaCrBaGhJo04}. Lets start with the MKL dual~\cite{LaCrBaGhJo04}

\begin{align}
\min_{\gamma \ge 0,K}  & \omega(K) (= \max_{\alpha \in S_m}- \frac{1}{2}\alpha^\top YKY\alpha + \sum_{i=0}^m \alpha_i) \nonumber \\
 & tr(K) = \delta \nonumber \\
& K = \sum_{i=1}^n \gamma_i K_i 
\end{align} 
where $S_m = \{ \alpha \in R^m | 0 \le \alpha \le C, y^T C=0 \}$.\\
Denote by $d_j\frac{t_j}{\delta} = \alpha^\top YK_jY \alpha $ and $t_j = Trace(K_j)$.
As $\omega(K)$ is convex, one can interchange the min and the max. 
Now, the dual looks like
\begin{align}\label{eqn:mkl}
 \max_{\alpha \in S_n,d} \min_{\gamma \ge 0} - \sum_j \gamma_jd_j\frac{t_j}{\delta} + \sum_i \alpha_i \nonumber \\
 \gamma^\top t = \delta \nonumber \\
d_j = \alpha^\top YK_jY \alpha 
\end{align} 
Define $\gamma'_j = \gamma_j \frac{t_j}{\delta}$ then the constraint 
$\gamma^\top t = \delta$ can be rewritten as $\sum_j \gamma_j' = 1$. The $l_\infty$ norm can be represented as
\begin{equation}
 \|v\|_{\inf} = max_{\sum_i \gamma_i \le 1, \gamma_i \ge 0} \sum_j \gamma_jv_j
\end{equation}
 for any $v_j \ge 0$ . Given this, Equation (\ref{eqn:mkl}) can be restated as 
\begin{align} \label{eqn:dual_mkl}
 \max_{\alpha \in S_n,d}  - \|d\|_{\infty} + \sum_i \alpha_i \nonumber\\
d_j = \alpha^\top YK_jY \alpha 
\end{align} 

This formulation (\ref{eqn:dual_mkl}) results in a sparse selection of kernels as shown in \cite{KlSo09}.
Similarly, equation (\ref{eqn:kloft_dual_1}) can also be rewritten as,
\begin{align} \label{eqn:kloft_dual}
 \max_{\alpha \in S_n,d}  - \|d\|_{p^*} + \sum_i \alpha_i \nonumber\\
d_j = \alpha^\top YK_jY \alpha \\
p^* = \frac{p}{p-1}
\end{align} 
The above formulation(\ref{eqn:kloft_dual}) is referred to as {\tt $L_{p}$ MKL} throughout this paper. 
Even though above formulation (\ref{eqn:kloft_dual}) uses generic norm over $d$, there is no guarantee of explicit control over sparsity.
In next section, we derive our CSKL formulation by modifying the norm on $d$.

\subsection{CSKL formulation}
Let $ v \in \RR^n_+$ denote the space of $n$ dimensional vectors with all components positive, i.e. $v_i > 0$. Let $v_{(i)}$ be the $i$th largest component of $v$, i.e. 
$v_{(1)} \ge v_{(2)} \ldots, v_{(n)}$
Consider the following convex function on $g_t: \RR^n_+ \rightarrow \RR$, 
 $g_t(v) = \sum_{i=1}^t v_{(i)}$ where $t$ is a positive integer less than $n$. 

We present our first claim by this theorem  
\begin{theorem} \label{theorem:B_norm}
If $v \in \RR_+^n$ such that $v_{(n)} > 0$ and $g_t(v)$ defined as before then  
\begin{align} \label{eqn:dual_budget}
 g_t(v) =& \max_{\gamma}  \  \gamma^T v \nonumber \\
s.t. & \sum_{i=1}^n \gamma_i = t, \ \ 0 \le \gamma_i \le 1, \forall i  & 
\end{align}
and at optimality $\gamma_i = 1, \mbox{whenever~} v_i > v_{(t)}$ and $\gamma_i = 0, \mbox{whenever~}  v_i < v_{(t)} $
\end{theorem}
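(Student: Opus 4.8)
The plan is to read (\ref{eqn:dual_budget}) as a linear program in $\gamma$ for fixed $v$ and to establish both the optimal value and the optimality pattern by a direct two–sided estimate, avoiding any vertex enumeration. Since both sides of (\ref{eqn:dual_budget}) are invariant under permuting coordinates, I would first relabel so that $v_1 = v_{(1)} \ge v_2 = v_{(2)} \ge \cdots \ge v_n = v_{(n)} > 0$, and note that the feasible set $\Gamma = \{\gamma : \sum_i \gamma_i = t,\ 0 \le \gamma_i \le 1\}$ is a nonempty compact polytope, so a maximizer exists.

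For the ``$\ge$'' direction I would simply exhibit the point $\bar\gamma$ with $\bar\gamma_i = 1$ for $i \le t$ and $\bar\gamma_i = 0$ otherwise; it is feasible and gives $\bar\gamma^\top v = \sum_{i=1}^t v_i = g_t(v)$. For the ``$\le$'' direction, take any $\gamma \in \Gamma$ and rewrite
\[ g_t(v) - \gamma^\top v = \sum_{i=1}^t (1-\gamma_i) v_i - \sum_{i=t+1}^n \gamma_i v_i . \]
Setting $s = \sum_{i=1}^t (1-\gamma_i)$, the constraint $\sum_i \gamma_i = t$ forces $s = \sum_{i=t+1}^n \gamma_i \ge 0$. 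Bounding the first sum below by $s\,v_{(t)}$ (each $v_i \ge v_{(t)}$ with weight $1-\gamma_i \ge 0$ for $i \le t$) and the second sum above by $s\,v_{(t)}$ (each $v_i \le v_{(t)}$ with weight $\gamma_i \ge 0$ for $i > t$) shows the difference is nonnegative, so the maximum is exactly $g_t(v)$.

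To obtain the optimality pattern I would exploit that at a maximizer $\gamma^\star$ both of those bounds are tight. Tightness of $\sum_{i \le t} (1-\gamma_i^\star) v_i = s\,v_{(t)}$ forces $(1-\gamma_i^\star)(v_i - v_{(t)}) = 0$ for all $i \le t$, i.e.\ $\gamma_i^\star = 1$ wherever $v_i > v_{(t)}$; tightness of $\sum_{i > t} \gamma_i^\star v_i = s\,v_{(t)}$ forces $\gamma_i^\star(v_{(t)} - v_i) = 0$ for all $i > t$, i.e.\ $\gamma_i^\star = 0$ wherever $v_i < v_{(t)}$. Since a coordinate whose value is strictly above $v_{(t)}$ must sit among the first $t$ sorted positions and one strictly below must sit beyond position $t$, translating back to the original indexing yields exactly the stated conclusion.

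I expect the only genuine subtlety to be the bookkeeping around ties at the threshold $v_{(t)}$: when several coordinates equal $v_{(t)}$ the maximizer is not unique, and the complementary-slackness conditions can constrain only the coordinates that are strictly above or strictly below $v_{(t)}$ — which is precisely why the theorem phrases the conclusion with strict inequalities. As a cross-check I would also note the dual LP, $\min_{\mu \in \RR}\, t\mu + \sum_i (v_i - \mu)_+$, which is minimized at $\mu = v_{(t)}$ with value $g_t(v)$, recovering the same optimal value and the same pattern; and that the hypothesis $v_{(n)} > 0$ is not actually needed here, being inherited from the ambient domain $\RR^n_+$.
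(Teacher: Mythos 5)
Your proof is correct, and it takes a genuinely different route from the paper's. The paper proves the theorem by writing down the Lagrangian of the LP, guessing the multiplier value $a = v_{(t)}$ for the equality constraint, and verifying that the KKT system can be satisfied (including a case analysis $T_1 + T_2 + T_3 = t$ to handle ties at the threshold), then invoking sufficiency of KKT for this convex problem. You instead give a purely primal, two-sided estimate: the sorted indicator vector shows the optimum is at least $g_t(v)$, and the identity $g_t(v) - \gamma^\top v = \sum_{i\le t}(1-\gamma_i)v_i - \sum_{i>t}\gamma_i v_i$ together with the mass-balance $s = \sum_{i\le t}(1-\gamma_i) = \sum_{i>t}\gamma_i$ sandwiches both sums against $s\,v_{(t)}$ to show it is at most $g_t(v)$. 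Your route is more elementary (no duality machinery, no appeal to sufficiency of KKT) and has the pleasant feature that the optimality pattern falls out automatically from tightness of the two estimates rather than from complementary slackness; the tie-handling that occupies the paper's $T_2$ bookkeeping is absorbed into the observation that only strictly-above and strictly-below coordinates are constrained. Your dual LP cross-check $\min_\mu\, t\mu + \sum_i (v_i-\mu)_+$ is, in effect, the paper's argument in disguise — its minimizer $\mu = v_{(t)}$ is exactly the multiplier $a$ the paper guesses. Your observation that $v_{(n)}>0$ is not actually needed is also correct; the paper's hypothesis is stronger than required. The one step you leave slightly compressed — that optimality forces \emph{both} bounds to be tight — is immediate from $A - B = 0$ with $A \ge s\,v_{(t)} \ge B$, but it would be worth one explicit line.
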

\begin{proof}
We begin by constructing the Lagrangian of the problem 
\begin{align} 
L(\gamma,a,\mu,\beta) = & \gamma^{\top} v - a \left( \sum_{i=0}^n \gamma_i - t \right) \\ 
 & + \sum_{i=1}^n \beta_i \gamma_i - \sum_{i=0}^n  \mu_i \left( \gamma_i - 1 \right)
\end{align}
where the lagrange multipliers are $\mu,\beta$ and $a$.
Apart from the feasibility conditions on $\gamma$ and the non-negativity constraints on the lagrange 
multipliers $\mu$ and $\beta$ the KKT conditions reads as 
\begin{align}
\frac{\partial L}{\partial \gamma_i} = 0 \implies a +  \mu_i = \beta_i + v_i \label{eqn:b_g}\\
\alpha \left( \sum_{i=0}^n  \gamma_i - t \right)  = 0 \\
\beta_i \gamma_i = 0 \\
\mu_i \left( \gamma_i - 1 \right) = 0
\end{align}
The proof hinges on that fact that 
 $a = v_{(t)} $ satisfies the KKT conditions.
We note that both $\beta_i$ and $\mu_i$ cannot be simultaneously positive.
If $v_i < a $, then \eqref{eqn:b_g} could be obtained by setting $\beta_i > 0$ and 
$\mu_i =0$. As $\beta_i > 0$ then $\gamma_i = 0$. 
Again if $v_i > a $, then \eqref{eqn:b_g} could be obtained by setting $\beta_i = 0$ and  $\mu_i  > 0$. As $\mu_i > 0$ then $\gamma_i = 1$. 
Interestingly note that
if  $v_i = a $ as both $\mu_i= \beta_i =0$ and $1 > \gamma_i > 0$. 
Let us now suppose that $a = v_{(t)}$
The constraint 
$ \sum_{i=1}^n \gamma_i = t $
can now be written as 
$$ \underbrace{\sum_{i: v_i < v_{(t)}} \gamma_i}_{T1} + \underbrace{\sum_{i: v_i = v_{(t)}} \gamma_i}_{T2} + \underbrace{\sum_{i: v_i> v_{(t)}} \gamma_i}_{T3}=t $$
Due to observations made before it is straightforward to see that  
 $T1 = 0$ and  $T3 \le t -1$. One can always choose feasible  
 $\gamma_i \ \ \forall\  v_i = v_{(t)}$ such that $T_2 = t - T_3$  
This establishes the fact that $a = v_{(t)}$ indeed satisfies the KKT conditions
and for which $\gamma_i = 1 (\gamma_i =0) $ if $v_i > v_{(t)}(v_i < v_{(t)})$.
    
As KKT conditions are necessary and sufficient for this problem \cite{boyd}
we see that at optimality   
$\gamma^\top v = \sum_{i=1}^t v_{(i)}$ 
obtained by substituting the $\gamma$ obtained before.
This completes the proof.
\end{proof}
By introducing $g_t(d)$ to the dual (As in Eqn. \ref{eqn:dual_mkl}) we get the following CSKL formulation,
\begin{align} \label{eqn:bmkl_svm}
 \max_{\alpha \in S_n,d}   - g_t(d) + \sum_i \alpha_i \nonumber\\
d_j = \alpha^\top YK_jY \alpha \nonumber \\
\end{align} 
Note that CSKL formulation (\ref{eqn:bmkl_svm}) explicitly controls the sparsity of kernel selection by varying $t$ as is evident from Theorem \ref{theorem:B_norm}.
\subsubsection{$\nu$-CSKL}
A variant of SVM is the $\nu$-SVM \cite{SCHOLKOPF} where parameter $C$ is replaced by a parameter $\nu=[0,1]$. Here, the parameter $\nu$ is lower bound on the 
fraction of number support vector and an upper bound on the fraction of margin errors. In this section we extend our CSKL formulation to $\nu$-SVM. 
The dual of $\nu$-SVM is given by,
\begin{align}
\max_{\alpha} & \ \ - \frac{1}{2}\alpha^\top YKY\alpha\nonumber \\
\textup{s.t.} & \ \  0\le\alpha_i\le\frac{1}{m}, \sum_{i=1}^{m}\alpha_iy_i=0, \sum_{i=1}^{m}\alpha_i \ge \nu \nonumber \\
\end{align} 
Introducing MKL to the dual of $\nu$-SVM and rewriting it similar to equation (\ref{eqn:dual_mkl}).
\begin{align} \label{eqn:nusvm_dual}
\max_{\alpha} & \ \ - \|d\|_{\infty} \nonumber \\
\textup{s.t.} & \ \  0\le\alpha_i\le\frac{1}{m}, \sum_{i=1}^{m}\alpha_iy_i=0, \sum_{i=1}^{m}\alpha_i \ge \nu \nonumber \\
& \ \ d_j = \alpha^\top YK_jY\alpha 
\end{align} 
We now introduce our CSKL formulation in the setting of $\nu$-SVM.
\begin{align} \label{eqn:nubmkl}
\max_\alpha & \ \ -g_t(d) \nonumber \\
\textup{s.t.} & \ \  0\le\alpha_i\le\frac{1}{m}, \sum_{i=1}^{m}\alpha_iy_i=0, \nu\ge\sum_{i=1}^{m}\alpha_i \nonumber \\
& \ \ d_j = \alpha^\top YK_jY \alpha 
\end{align}
The above formulation is denoted as {\tt $\nu$-CSKL} throughout this paper. 
\section{Algorithms for solving CSKL formulations} \label{sec:algo}
We present an alternating optimization scheme for solving the $\nu-CSKL$ formulation. For a fixed $\gamma$, we solve the following maximization for $\alpha$, 

\begin{align} \label{eqn:nubmkl_expanded}
\max_\alpha & \ \ -\gamma^T d \nonumber \\
\textup{s.t.} & \ \  0\le\alpha_i\le\frac{1}{m}, \sum_{i=1}^{m}\alpha_iy_i=0, \nu\ge\sum_{i=1}^{m}\alpha_i \nonumber \\
& \ \ d_j = \alpha^\top YK_jY \alpha 
\end{align}

Note that in above problem $\gamma$ should satisfy the conditions $\ \ \sum_{i=1}^n \gamma_i = t, \ \ 0 \le \gamma_i \le 1, \forall i  $. We can use standard Sequential Minimal Optimization(SMO) solver for the above problem. Once optimal $\alpha^*$ is calculated, we compute $d$ as,
$d_j = \alpha^{*\top }YK_jY \alpha^*$. Next step is to solve for $\gamma.$ We can find the optimal $\gamma$ by solving $g_t(d)$ using Reduced Gradient Descent or a Linear Programming based Gradient Descent.

\begin{algorithm}                      
\caption{$\nu$-CSKL Algorithm}          
\label{alg1}                         
\begin{algorithmic}                    
\REQUIRE $ x^TK_jx > 0, \forall x \neq 0 , \forall j$
\STATE INPUT: $N=$ number of kernels
\STATE $\gamma = \frac{t}{N}$
\STATE $objOld = 0$
\WHILE{$\delta \le \epsilon$}
\STATE Solve $\alpha$ using SMO solver with kernel $ K = \sum_j^N \gamma_j  K_{j}$
\STATE Compute $ d_j = \alpha^T Y K_{j} Y \alpha$
\STATE Solve $g_t(d)$ using Reduced Gradient Descent or Linear Programming based solver
\STATE $obj = -\frac{1}{2}\alpha^T Y \left(\sum_{j=1}^N\gamma_jK_{j}\right) Y \alpha$ 
\STATE $\delta = obj - objOld$
\STATE $ objOld = obj $
\ENDWHILE
\end{algorithmic}
\end{algorithm}

\begin{algorithm}
 \caption{Reduced Gradient Algorithm for Solving $g_t(d)$}
 \label{algo:redgrad}
\begin{algorithmic}
  \STATE $J(\alpha, \gamma) = -\frac{1}{2}\alpha^T Y \left(\sum_{j=1}^N\gamma_j K_{j} \right) Y \alpha $ 
  \STATE Set $\mu=argmin_{m}\left(abs\left(\gamma_m-0.5\right)\right)$
  \STATE Set $J_{new}=J-1$, $\gamma_{new}=\gamma$
  \STATE Compute $\phi_m = \frac{\delta J}{\delta \gamma_m}$ for $m=1,\dots,N$
  \STATE Compute the descend direction $D(d,\mu,\phi)$
  \STATE Set $D_{new}=D$
  \WHILE{$J_{new} < J$}
    \STATE $\gamma=\gamma_{new}$
    \STATE $D=D_{new}$
    \STATE $\nu_1 = \underset{\left(m|D_m<0\right)}{argmin} -\frac{\gamma_m}{D_m}$
    \STATE $\nu_2 = \underset{\left(m|D_m>0\right)}{argmin} \frac{1-\gamma_m}{D_m}$
    \STATE $S_{max}=min\left(-\frac{\gamma_{\nu1}}{D_{\nu1}},\frac{1-\gamma_{\nu2}}{D_{\nu2}}\right)$ 
    \STATE $\nu =\underset{\nu_1,\nu_2}{argmin}\left(-\frac{\gamma_{\nu1}}{D_{\nu_1}},\frac{1-\gamma_{\nu2}}{D_{\nu_2}}\right)$
    \STATE $\gamma_{new}=\gamma +S_{max}D$, $D_{new}\left(\mu\right)=D_{\mu}-D_{\nu}$
    \STATE $D_{new}\left(\nu\right)=0$
    \STATE Compute $J_{new}(\alpha, \gamma_{new})$
  \ENDWHILE
  \STATE Linesearch along $D$ for $S \in \left[ 0,S_{max}\right]$
  \STATE $\gamma \leftarrow \gamma + S D$
 \end{algorithmic}
\end{algorithm}

In Algorithm \ref{algo:redgrad}, we present our Reduced Gradient Algorithm to solve $\gamma$.
The SVM solver is used to obtain $J$ (see Algorithm \ref{algo:redgrad}). The Descent Direction $D$ is defined as per Algorithm \ref{algo:descentdir}. In the case of $\nu$-CSKL, the value of $J\left(\alpha, \gamma\right) = -\gamma^Td$ while in the case of C-CSKL it is $-\gamma^Td+\alpha^Te$ while the rest of the framework remains the same.

\begin{algorithm}
 \caption{Calculating the Descent Direction}
 \label{algo:descentdir}
\begin{algorithmic}
\STATE INPUT: Kernel Weights $\gamma$, Selected Pivot $\mu$, Calculated Gradients $\phi$
\STATE OUTPUT: Optimal Descent Direction $D(\gamma,\mu,\phi)$
\FOR{$m=1$ to $N$}
\STATE $ D_m = 0 $
\IF {$\left(\gamma_m==0 \right.$ \& $\left. \phi_m - \phi_\mu > 0 \right)$}
\STATE $ D_m = 0 $ 
\ELSIF {$\left(\gamma_m==1\right.$ \& $\left.\phi_m - \phi_\mu < 0 \right)$}
\STATE $ D_m = 0 $ 
\ELSIF {$\left(\gamma_m>0\right.$ \& $\left.m!=\mu\right)$}
\STATE $ D_m =  -\phi_m - \phi_\mu$ 
\ELSIF {$\left(m==\mu\right)$}
\STATE $ D_m = \sum_{\nu!=\mu,\gamma_\nu>0}\left(\phi_\nu - \phi_\mu\right)$ 
\ENDIF
\ENDFOR
\end{algorithmic}
\end{algorithm}

Due to our assumptions on $K$, in both the cases, $J$ is convex and differentiable with Lipschitz gradient wrt. $\gamma$ \cite{Bonnans}.
For such functions the Reduced Gradient Method converges with bounds as defined in \cite{Luenberger84linearand}.

We also present a linear programming based approach to solve for $g_t(d)$. We use some standard LP Solver to solve the following linear program for finding descent direction $D$ for $g_t(d)$.
\begin{align}\label{eqn:linearprog}
\min_D & \ \ {\phi}^T D \nonumber \\
& D^T1 = 0, {}-\gamma \ge D\ge 1-\gamma
\end{align}
where $\phi_m=\frac{\partial J}{\partial \gamma_m}$ and step size $(S)$ can be found by using line search.
$\gamma$ is updates as $\gamma_{new}=\gamma+SD$.
Though this algorithm is found to converge for $K>0$ we have no bounds on its convergence as yet.

\section{Experiments.} \label{sec:expts}
To illustrate the benefits of CSKL formulation, we give results on Synthetic data and the Caltech101 \cite{caltech101} real-world Object Categorization dataset. We compare {\tt CSKL} algorithm with {\tt SimpleMKL} [Equation \ref{eqn:smkl_dual}] \footnote{Implementation downloaded from http://asi.insa-rouen.fr/enseignants/~arakotom/code/mklindex.html} which is a sparse selection algorithm, and {\tt $L_p$ MKL} [Equation \ref{eqn:kloft_dual_1}] with $p=2$ which is a non-sparse selection algorithm \footnote{Implementation available in the Shogun toolbox : http://www.shogun-toolbox.org/}. 
\subsection{Datasets}
In this section, we describe the datasets we used for our experiments.
\subsubsection{Synthetic Dataset}
To show the effect of noisy kernels, we generated $n=18$ kernels out of which $16$ are informative kernels and $2$ are noisy kernels. To build these kernels,
we sampled $m=500$ datapoints with dimension $d=3$ from two independent Gaussian distributions with covariance as the identity matrix 
and different means($\mu_1=0$ and $\mu_2=3$, Datapoints sampled from different Gaussians are assumed to belong to different classes).
We generated four kernels (two gaussian($\sigma=$ and $\sigma=$) and two polynomial kernels($\sigma=$ and $\sigma=$)) for each dimension seprately and all together($4*3+4=16$).
On top of this, we also added two carefully chosen noisy kernels to this kernel set.

\subsubsection{Caltech101}
The Caltech101 dataset has 102 categories of images such as airplanes, cars, leopards, etc. It has been shown by \cite{VaRa07,kumar07,eth_biwi_00649} that 
multiple image descriptors aid in the generalization ability of the learnt classifier. Using the method followed 
by \cite{VaRa07} \footnote{http://www.robots.ox.ac.uk/~vgg/software/MKL/v1.0/}, we extract the following 4 descriptors 
: PhowColor, PhowGray, GeometricBlur and SelfSimilarity. Each descriptor gives rise to a distance matrix. 
We create multiple Gaussian kernels for each descriptor by varying the Gaussian width parameter used to generate the kernel. 
We currently used 5 width values in the log space of -4 to 0. Hence we arrive at a total of 20 kernels. 
The number of binary classification problems are 5151 and 102, for 1-vs-1 and 1-vs-rest classification approaches respectively. 

\subsection{Need for Control Over Sparsity}
The key result we wish to establish is that by suitable variation of parameter $t$ in {\tt CSKL}, 
one can combine good kernels and eliminate noisy kernels and achieve better generalization than other MKL formulations. \\

In the Synthetic dataset setting $t = 1$ will facilitate sparse selection, 
and $t = n$ facilitates a complete non-sparse selection. As shown in the figure \ref{toy}, 
the {\tt CSKL} formulation clearly outperforms both sparse and non-sparse MKL by setting $t=4$. 
It is clear that setting $t=4$ in {\tt CSKL} gives better generalization performance than both $t=1$ and $t=18$. 
This clearly shows neither sparse nor complete non-sparse is good for this dataset. {\tt CSKL} 
is the only formulation which can capture all good kernels but still eliminate the noisy kernels by tuning parameter $t$. \\

In order to demonstrate that neither sparse nor non-sparse solutions are always the best in real-world datasets, 
we take all the binary 1-vs-1 and 1-vs-Rest classifiers in Caltech101 dataset and compare {\tt SimpleMKL} and {\tt $L_2$ MKL} solutions.
Figures \ref{fig:kloft_vs_SimpleMKL}, \ref{fig:kloft_vs_SimpleMKL_1vrest} show the ratio of improvement in accuracy of {\tt $L_2$ MKL} over {\tt SimpleMKL}.
 It is evident from the figures that neither of the algorithms are always the best. 
Thus, depending on the binary classification problem, we need to have different controls on the sparsity to achieve the state-of-the-art performance. 
Clearly this motivates that, to achieve the desired sparsity, we can use the {\tt CSKL} formulation instead of either {\tt SimpleMKL} or {\tt $L_2$ MKL}. 
In next section we show how {\tt CSKL} can achieve better performance than other algorithms.

\begin{figure}
\centering
\subfloat[Toy Dataset]{\includegraphics[width=3in,height=3in]{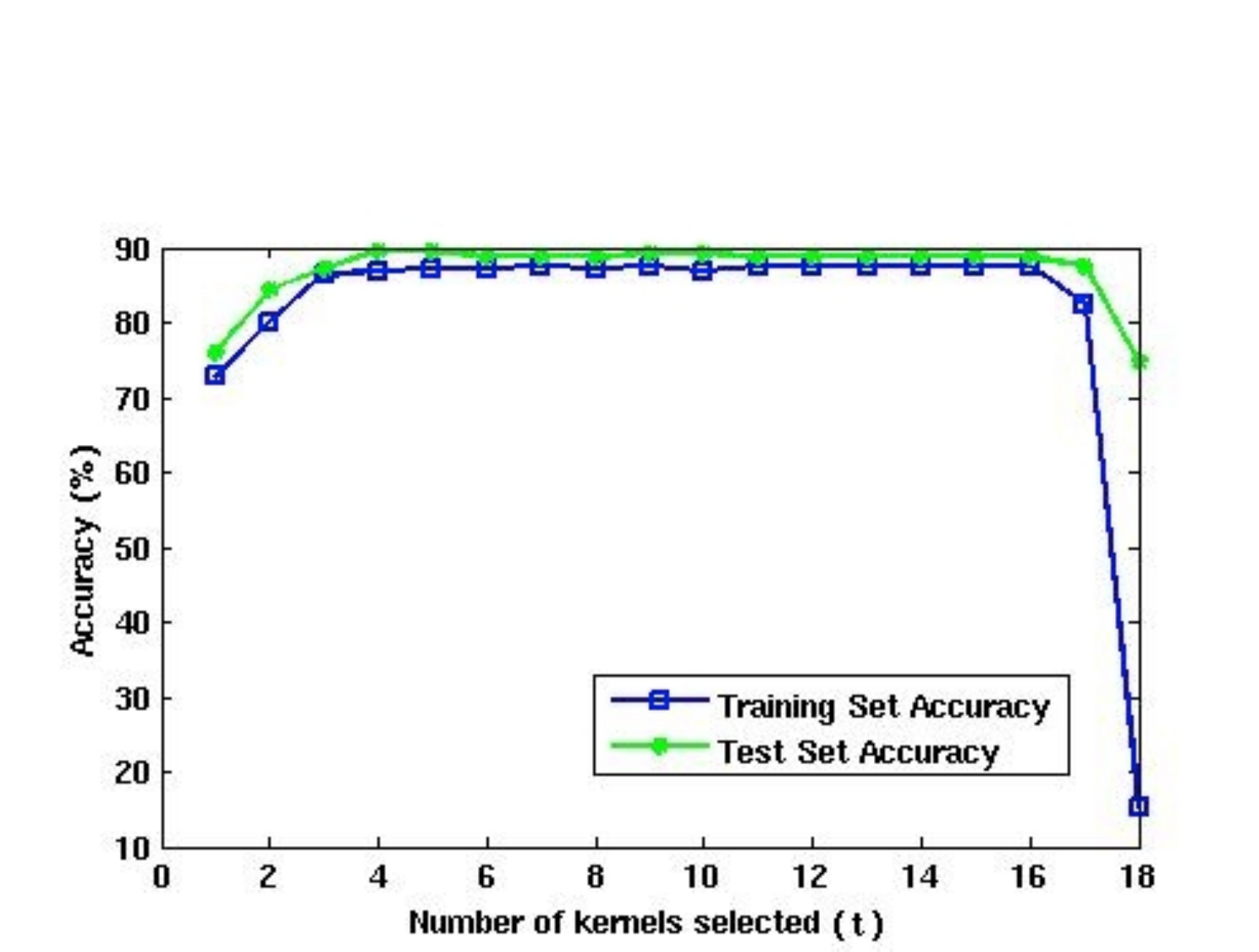}}
\caption{Plot of average accuracy achieved with \textbf{$\nu$-CSKL} on the Toy Dataset with respect to the parameter $t$.} \label{toy}
\end{figure}

\begin{figure}
\centering
\subfloat[Caltech-101]{\includegraphics[width=3in,height=3in]{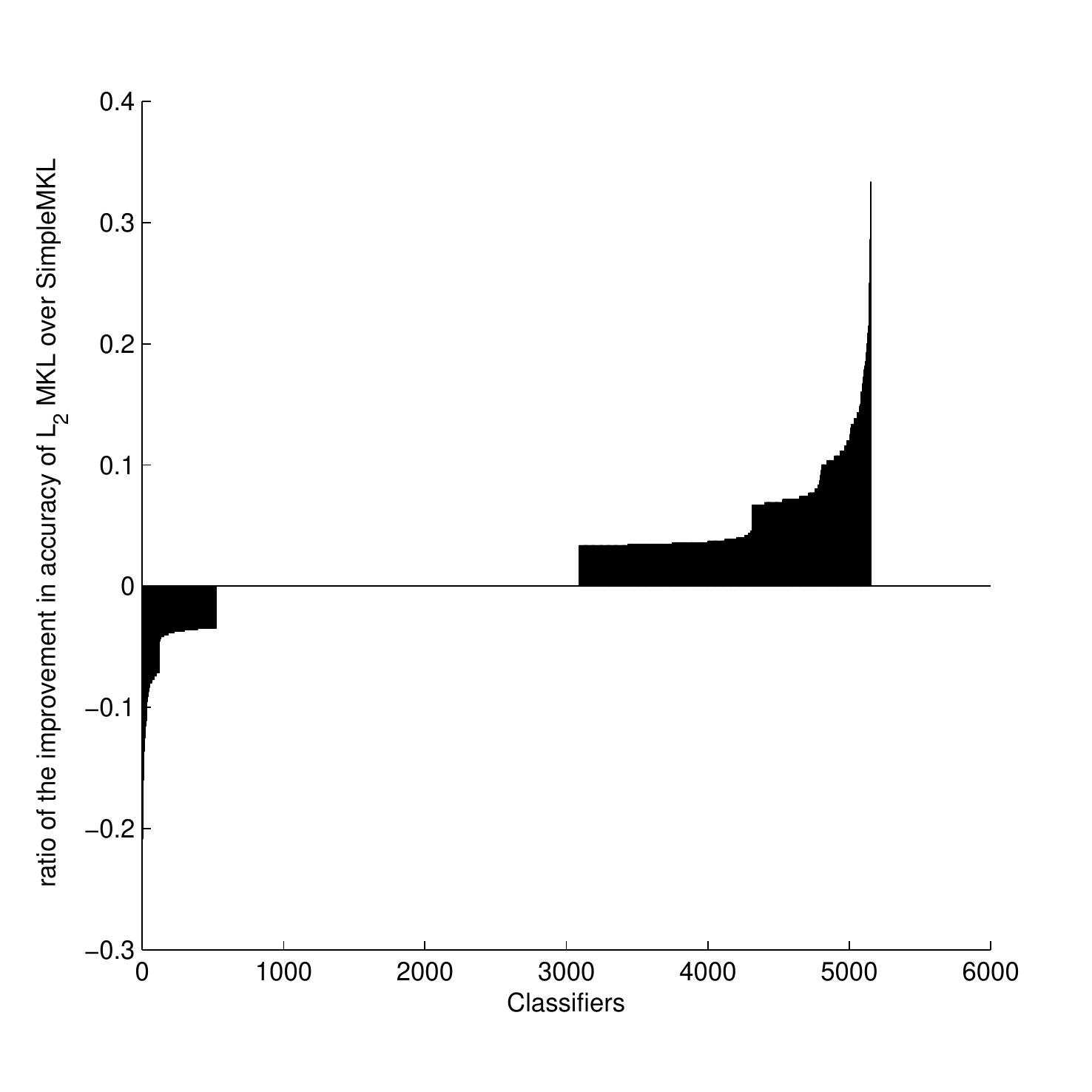}}
\caption{Ratio of improvement in accuracy of {\tt $L_2$ MKL} over {\tt SimpleMKL} across all binary 1-vs-1 classifiers in Caltech101 dataset } \label{fig:kloft_vs_SimpleMKL}
\end{figure}
\begin{figure}
\centering
\subfloat[Caltech-101]{\includegraphics[width=3in,height=3in]{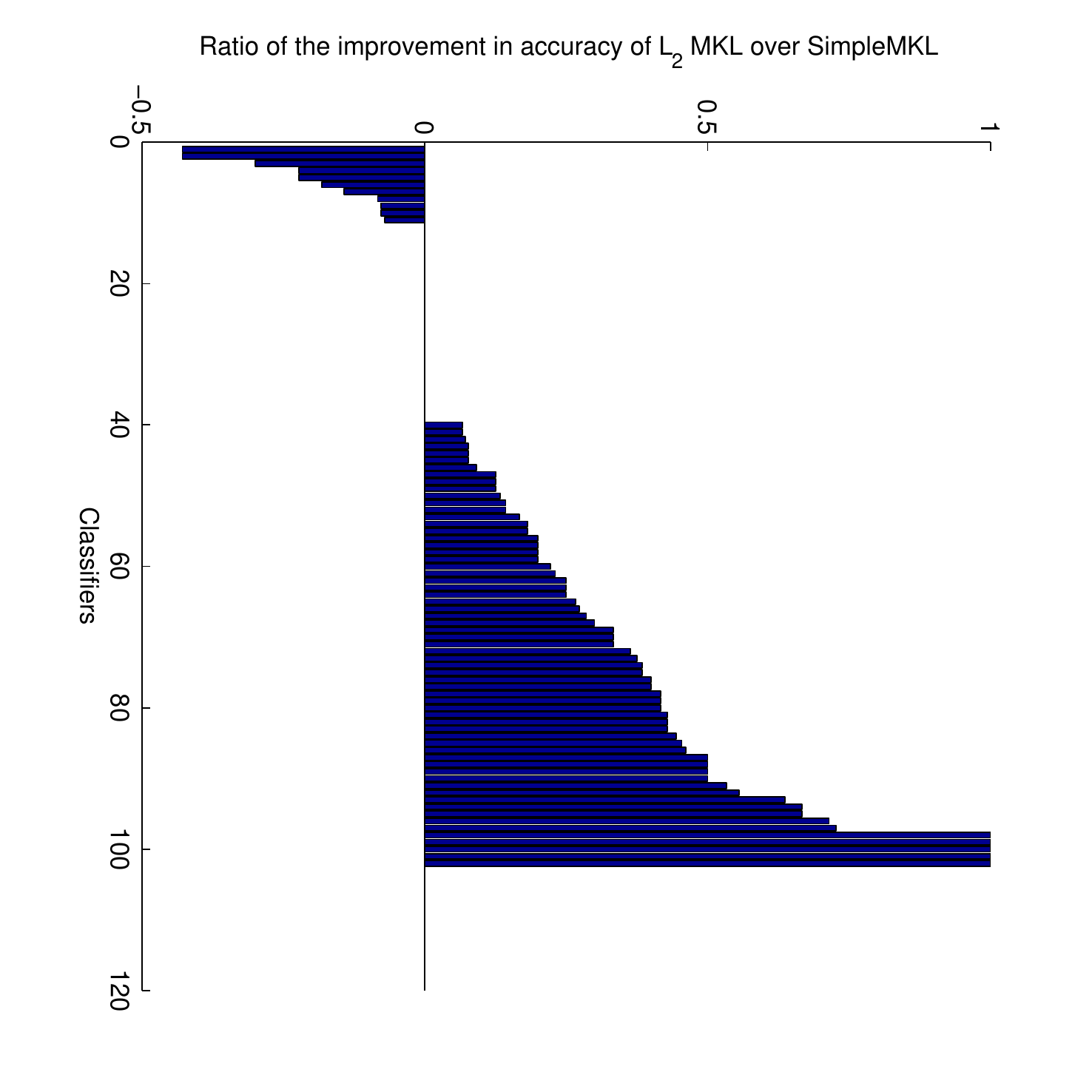}}
\caption{Ratio of improvement in accuracy of {\tt $L_2$ MKL} over SimpleMKL across all binary 1-vs-Rest classifiers in Caltech101 dataset} \label{fig:kloft_vs_SimpleMKL_1vrest}
\end{figure}

\subsection{Performance of CSKL}
We apply our {\tt CSKL} algorithm, and compare its performance against the other state-of-the-art algorithms {\tt SimpleMKL}  and {\tt $L_{2}$ MKL}. 
We take the highest accuracy achieved by {\tt CSKL} across various values of parameter $t$ for the comparison. 
Figures \ref{fig:BMKL_vs_Kloft_1v1}, \ref{fig:BMKL_vs_SimpleMKL_1v1} show the ratio of improvement in accuracy of {\tt CSKL} 
over {\tt $L_2$ MKL} and {\tt SimpleMKL}. 

\begin{figure}
\centering
\subfloat[Caltech-101]{\includegraphics[width=3.5in,height=3.5in]{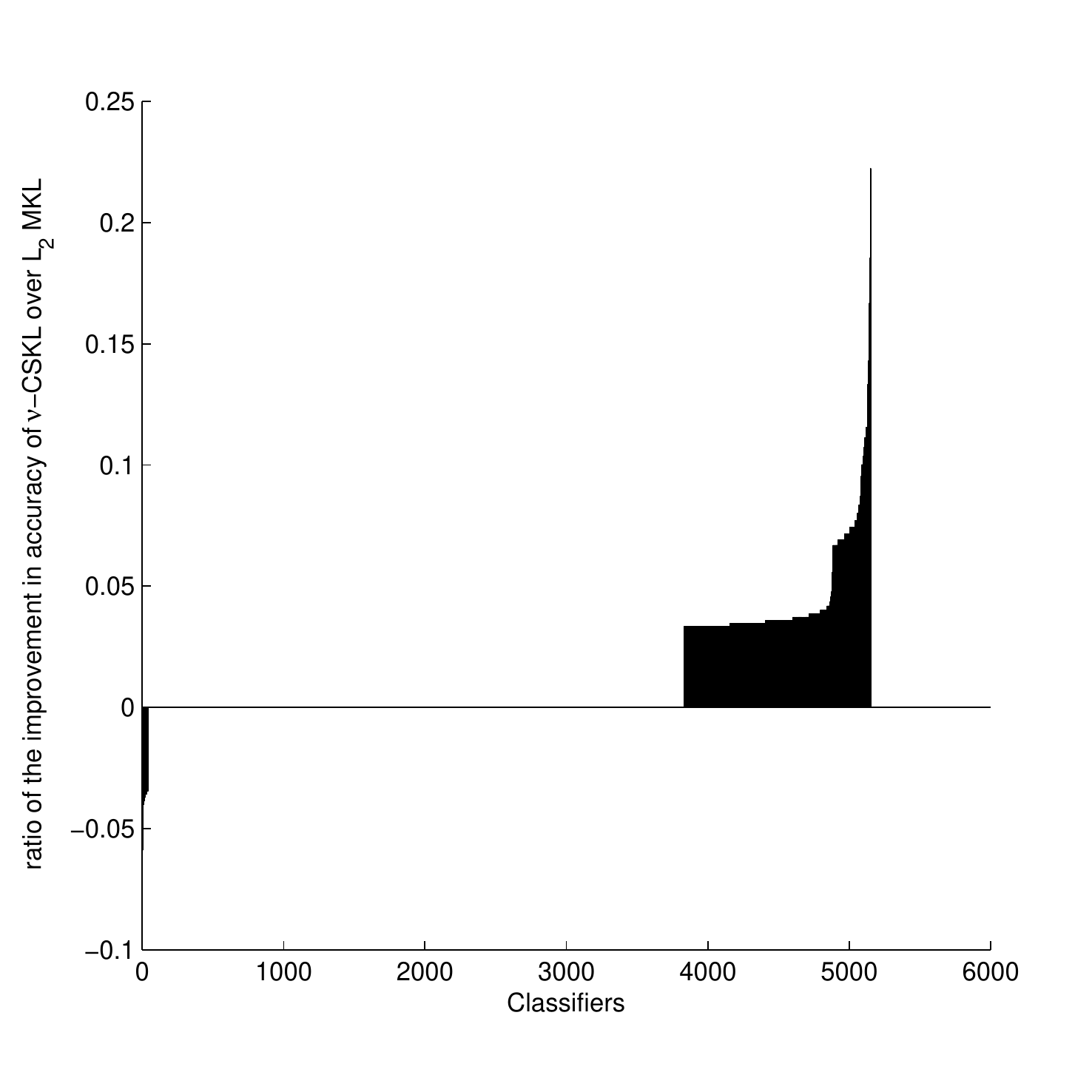}}
\caption{Ratio of improvement in accuracy of {\tt $\nu$-CSKL} over {\tt $L_2$ MKL} across all binary 1-vs-1 classifiers in Caltech101 dataset } \label{fig:BMKL_vs_Kloft_1v1}
\end{figure}
\begin{figure}
\centering
\subfloat[Caltech-101]{\includegraphics[width=3.5in,height=3.5in]{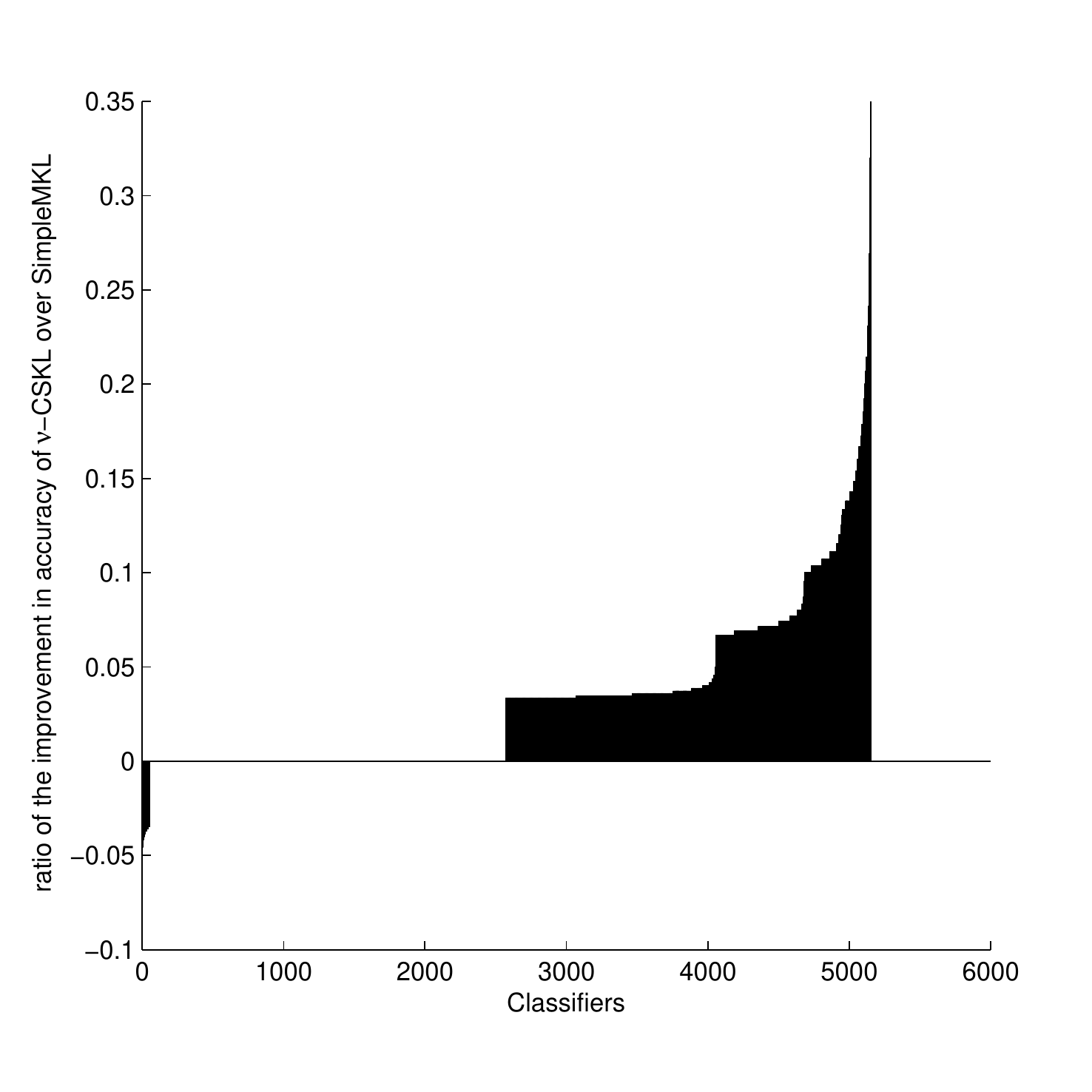}}
\caption{Ratio of improvement in accuracy of {\tt $\nu$-CSKL} over {\tt SimpleMKL} across all binary 1-vs-1 classifiers in Caltech101 dataset} \label{fig:BMKL_vs_SimpleMKL_1v1}
\end{figure}

\begin{figure}
\centering
\subfloat[Caltech-101]{\includegraphics[width=3.5in,height=3.5in]{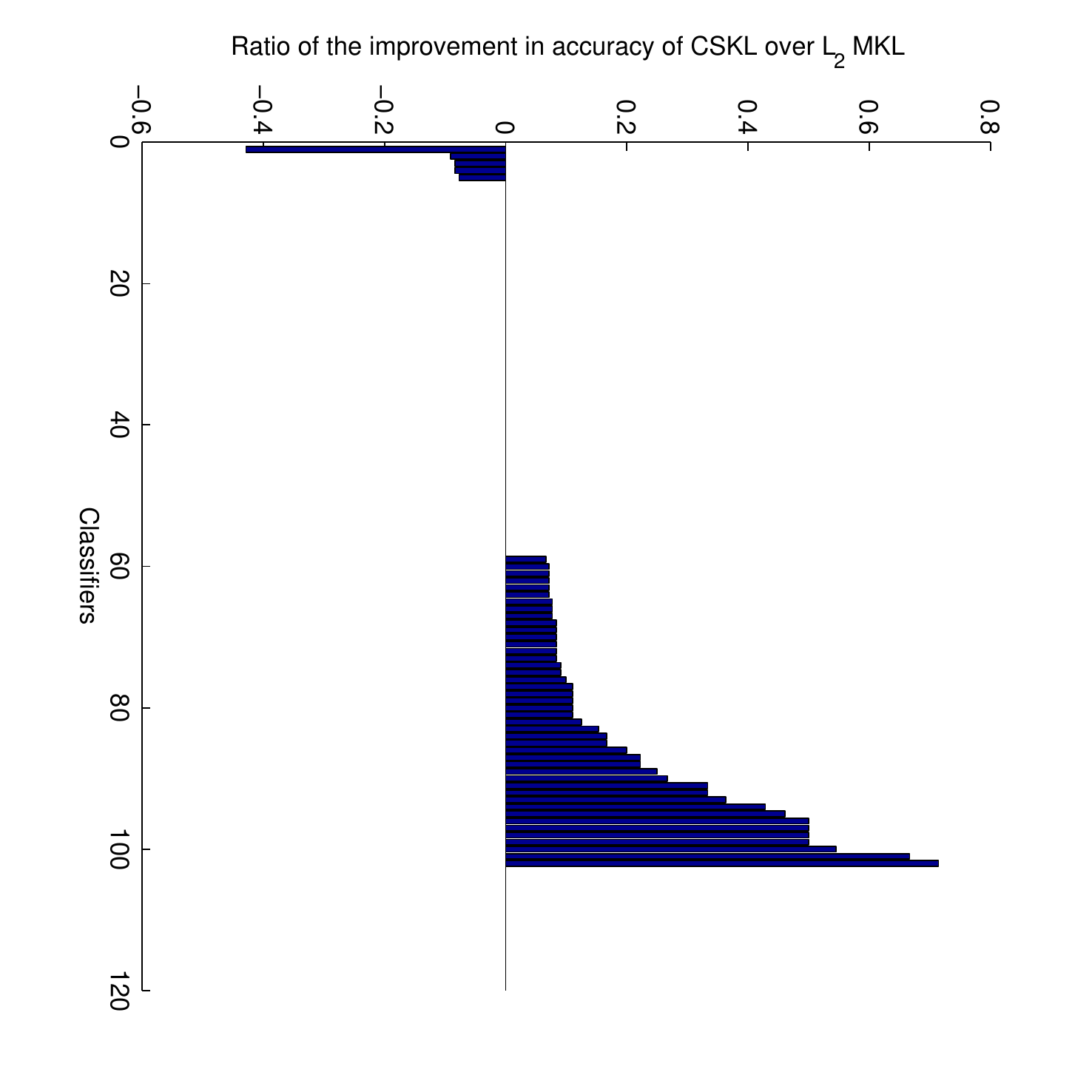}}
\caption{Ratio of improvement in accuracy of {\tt CSKL} over {\tt $L_2$ MKL} across all binary 1-vs-Rest classifiers in Caltech101 dataset } \label{fig:BMKL_vs_Kloft_1vrest}
\end{figure}
\begin{figure}
\centering
\subfloat[Caltech-101]{\includegraphics[width=3.5in,height=3.5in]{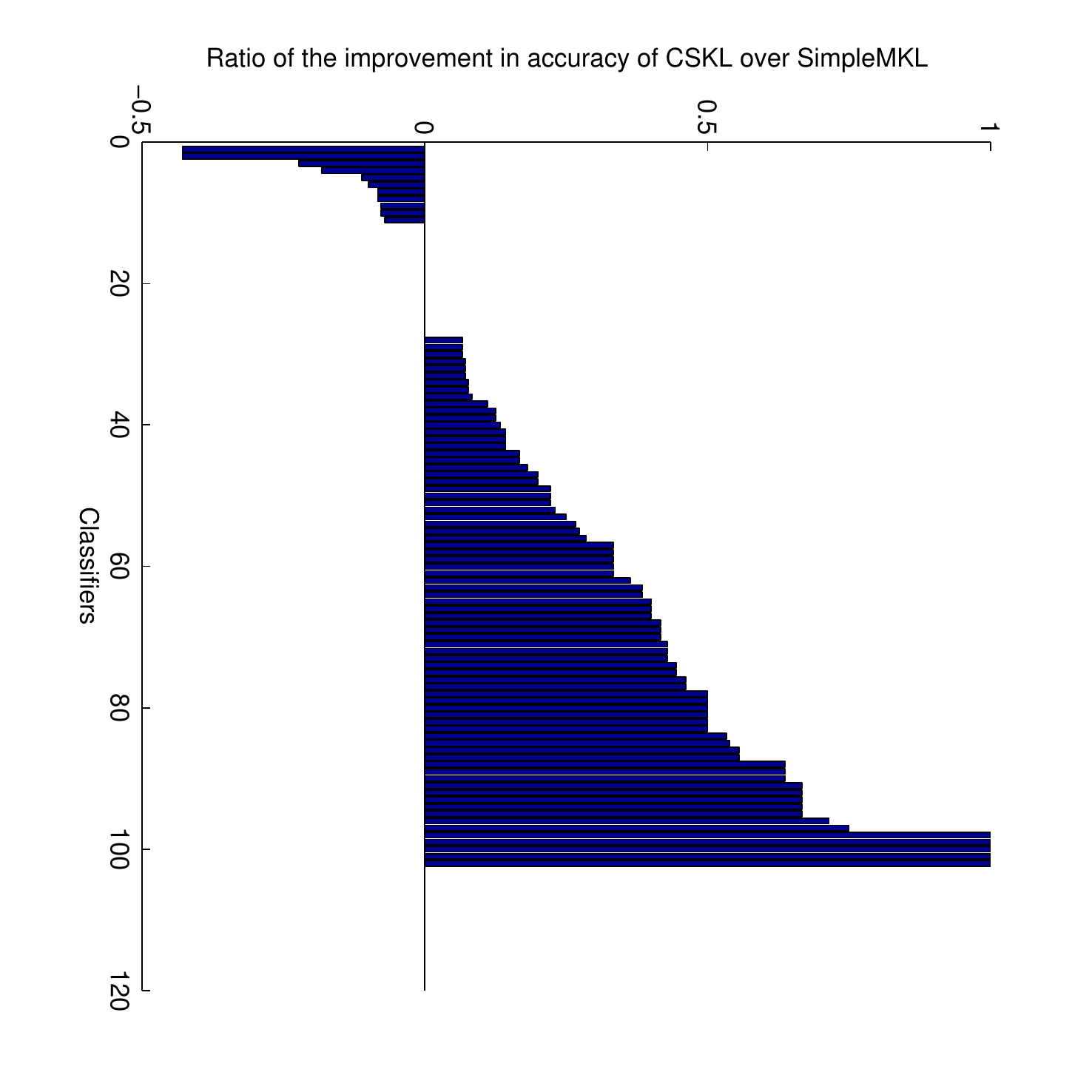}}
\caption{Ratio of improvement in accuracy of {\tt CSKL} over {\tt SimpleMKL} across all binary 1-vs-Rest classifiers in Caltech101 dataset} \label{fig:BMKL_vs_SimpleMKL_1vrest}
\end{figure}

We also present here overall performance of {\tt CSKL} on Caltech101 dataset. Figure \ref{fig:caltech101} shows the performance of {\tt CSKL} as $t$ is varied. For comparison, we have shown a straight line which shows the average accuracy achieved by {\tt SimpleMKL} and {\tt $L_{2}$ MKL}.
Figure \ref{fig:caltech101} clearly shows that all the 20 kernels are not necessary, since the {\tt CSKL} accuracy more or less saturates after $t > 4$. 
The result also shows that a sparse selection algorithm like {\tt SimpleMKL} wont be most efficient algorithm in terms of the accuracy achieved. 
And the performance of {\tt CSKL} is almost equal to that of {\tt $L_{2}$ MKL}, but the latter selects all the provided kernels, while we can achieve competitive accuracy with the former itself at $t = 4$. Note that no other formulation can give this flexibility to users to select exactly four best performing kernels. It is natural to use use $t=4$ here because number of descriptors used is four. Hence the experiments demonstrated in this section provide a proper justification for the usage of the {\tt CSKL} formulation.

\begin{figure}
\centering
\subfloat[Caltech-101]{\includegraphics[width=3in,height=3in]{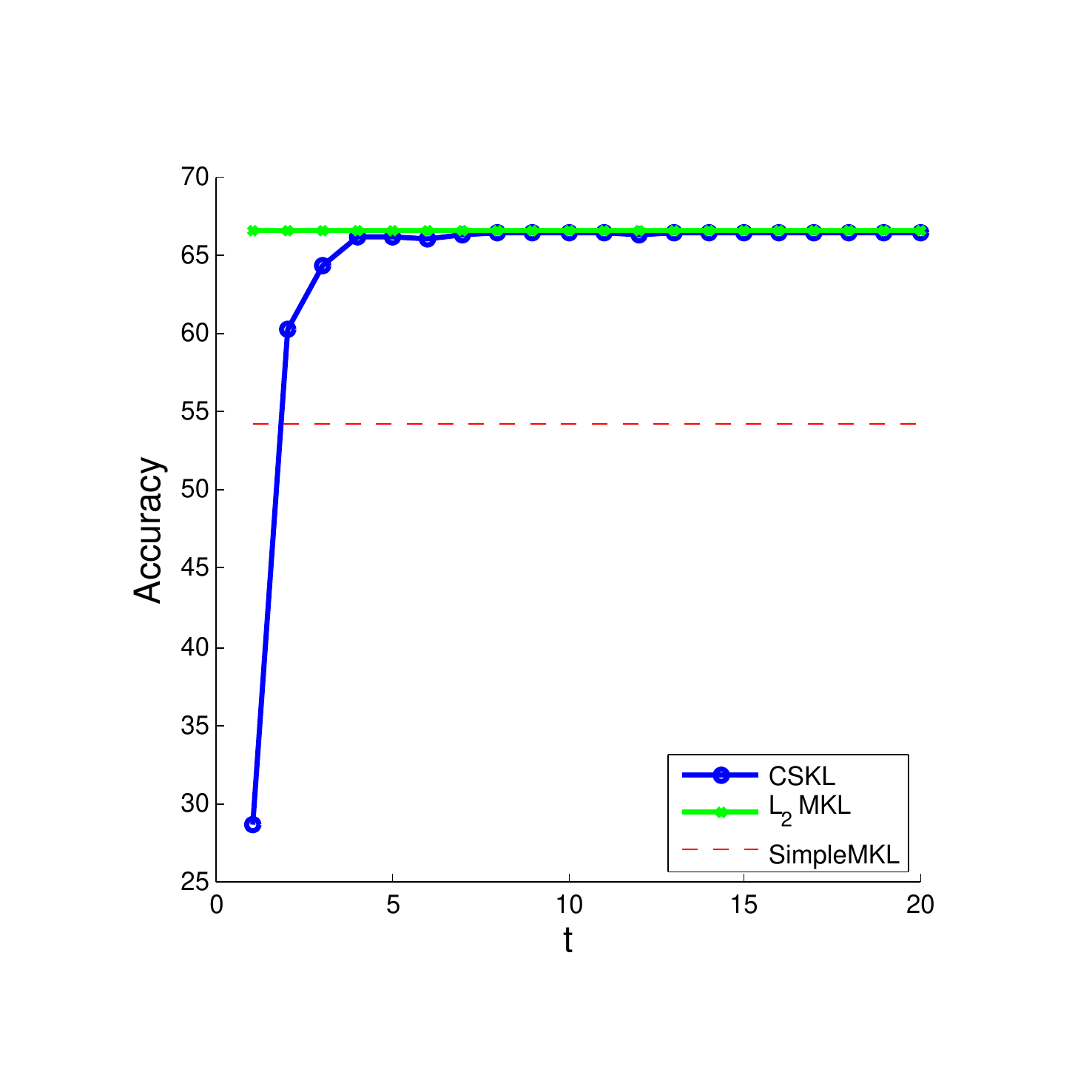}}
\caption{Plot of average accuracy with {\tt CSKL} on the Caltech101 dataset with respect to the parameter $t$.} \label{fig:caltech101}
\end{figure}

\subsection{Discussion}
To analyze more on why {\tt CSKL} achives better accuracy, we plot the histogram of number of descriptors selected when {\tt CSKL} outperforms {\tt $L_2$ MKL} and {\tt SimpleMKL} in the binary classification problems. We see that, in the figures \ref{fig:SimpleMKL_vs_BMKL_1v1_hist_bar}, \ref{fig:SimpleMKL_vs_BMKL_1vrest_hist_bar}, {\tt SimpleMKL} only selects one or two descriptors whereas {\tt CSKL} select all the descriptors.  For the cases where {\tt SimpleMKL} doesnt perform the best, non-sparse combination might be a better choice, and this is emperically confirmed in the figures \ref{fig:SimpleMKL_vs_BMKL_1v1_hist_bar}, \ref{fig:SimpleMKL_vs_BMKL_1vrest_hist_bar}.
Similarly from the figures \ref{fig:Kloft_vs_BMKL_1v1_hist_bar}, \ref{fig:Kloft_vs_BMKL_1vrest_hist_bar}, we see that {\tt $L_2$ MKL} selects all the descriptors whereas {\tt CSKL} does not select all descriptors most of the cases. These are expected, since, for the cases where {\tt $L_2$ MKL} perform low, it may mean that a non-sparse classification is preferable. And the same is reflected in the figures \ref{fig:Kloft_vs_BMKL_1v1_hist_bar}, \ref{fig:Kloft_vs_BMKL_1vrest_hist_bar}.

Out of the 5151 binary classification problems in the 1-vs-1 setting, {\tt $\nu$-CSKL} performed better in 5112 and 1498 cases against {\tt $L_2$ MKL} and {\tt SimpleMKL} respectively. Similarly in the 1-vs-Rest setting, out of the 102 classification problems, the numbers turned out to be 
97 and 91 against {\tt $L_2$ MKL} and {\tt SimpleMKL} respectively.

Finally, Figure \ref{fig:caltech101_desc} shows the number of descriptors selected as $t$ is increased. 
We can infer that as $t$ increased beyond 4, all the descriptors are selected. This is also not surprising since all the 4 descriptors used in our experiment are independent and experimentally they have been shown to aid the accuracy of the Object Categorization problem.


\begin{figure}
\centering
\subfloat[Caltech-101]{\includegraphics[width=3in,height=3in]{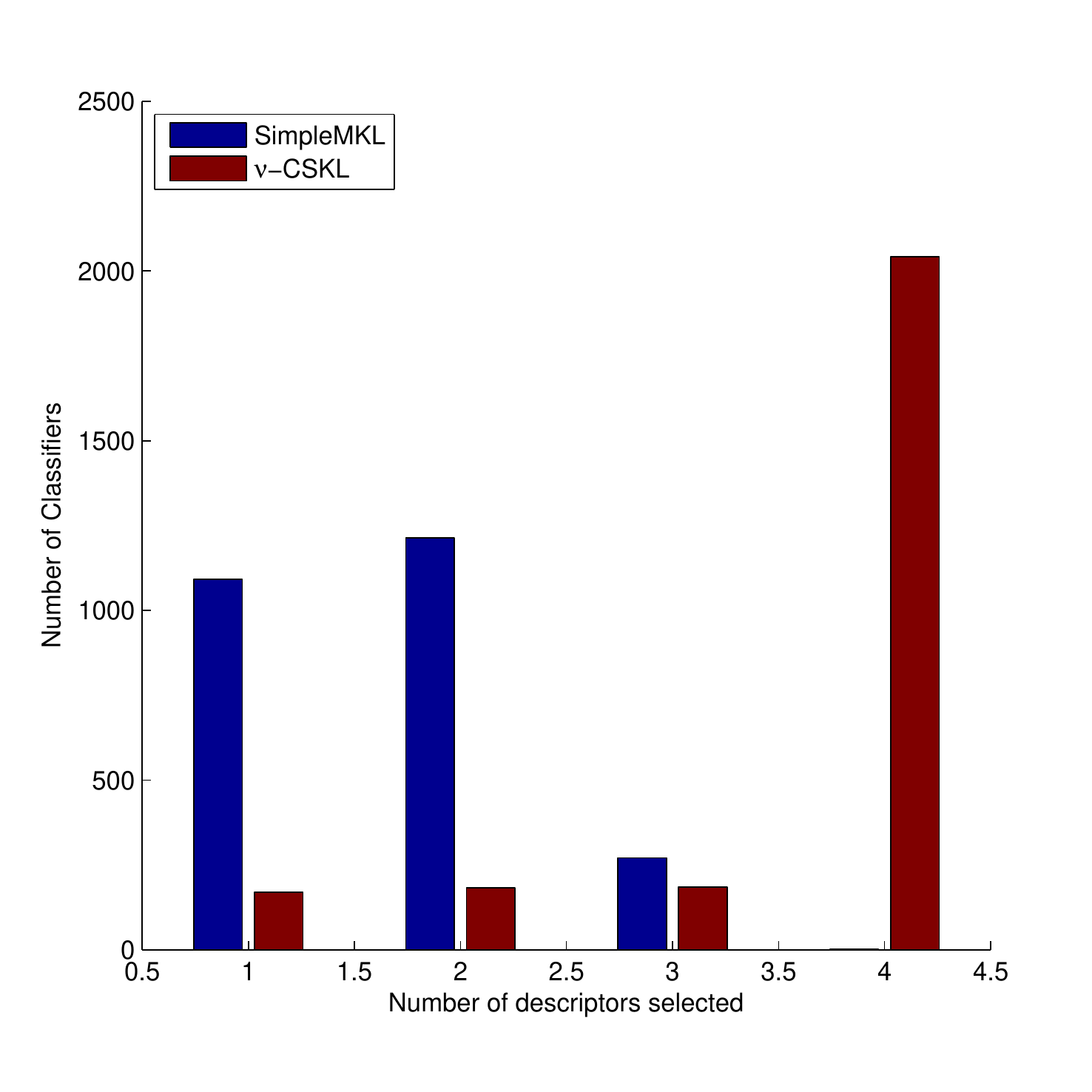}}
\caption{Histogram of number of descriptors selected where {\tt $\nu$-CSKL} performs better than {\tt SimpleMKL} in Caltech101 1-vs-1 binary classification problems} \label{fig:SimpleMKL_vs_BMKL_1v1_hist_bar}
\end{figure}
\begin{figure}
\centering
\subfloat[Caltech-101]{\includegraphics[width=3in,height=3in]{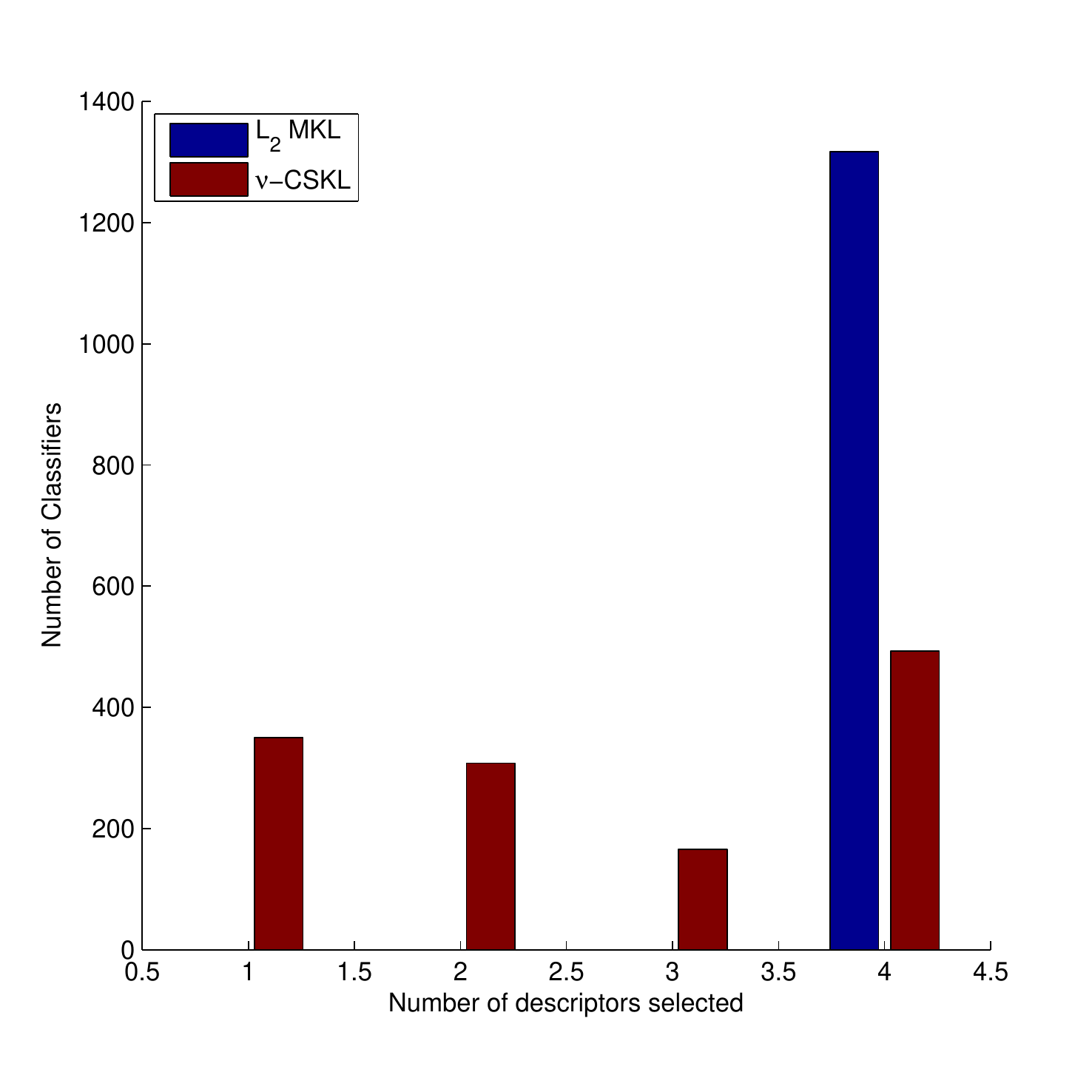}}
\caption{Histogram of number of descriptors selected where {\tt $\nu$-CSKL} performs better than {\tt $L_2$ MKL} in Caltech101 1-vs-1 binary classification problems} \label{fig:Kloft_vs_BMKL_1v1_hist_bar}
\end{figure}

\begin{figure}
\centering
\subfloat[Caltech-101]{\includegraphics[width=3in,height=3in]{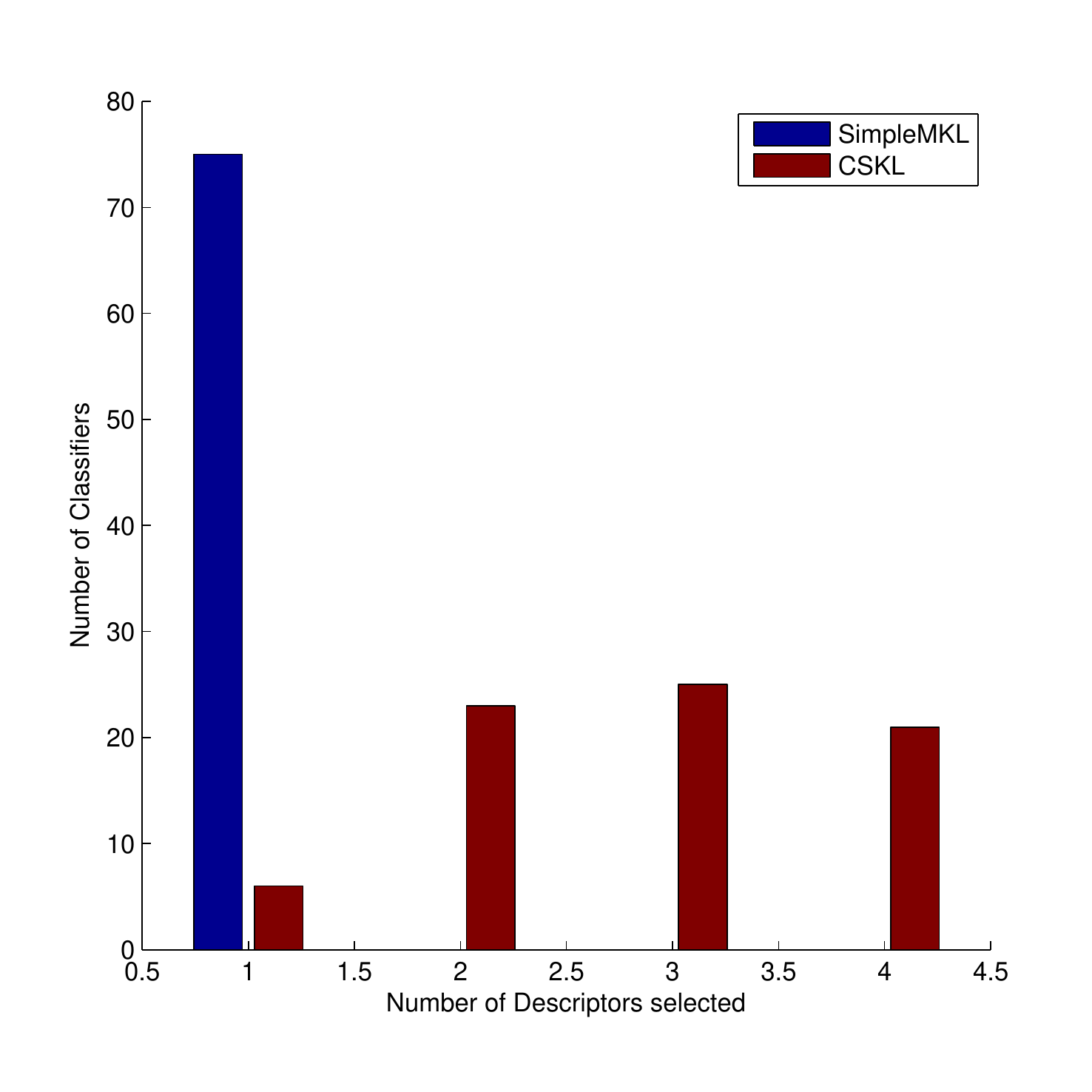}} 
\caption{Histogram of number of descriptors selected where {\tt CSKL} performs better than {\tt SimpleMKL} in Caltech101 1-vs-Rest binary classification problems} \label{fig:SimpleMKL_vs_BMKL_1vrest_hist_bar}
\end{figure}
\begin{figure}
\centering
\subfloat[Caltech-101]{\includegraphics[width=3in,height=3in]{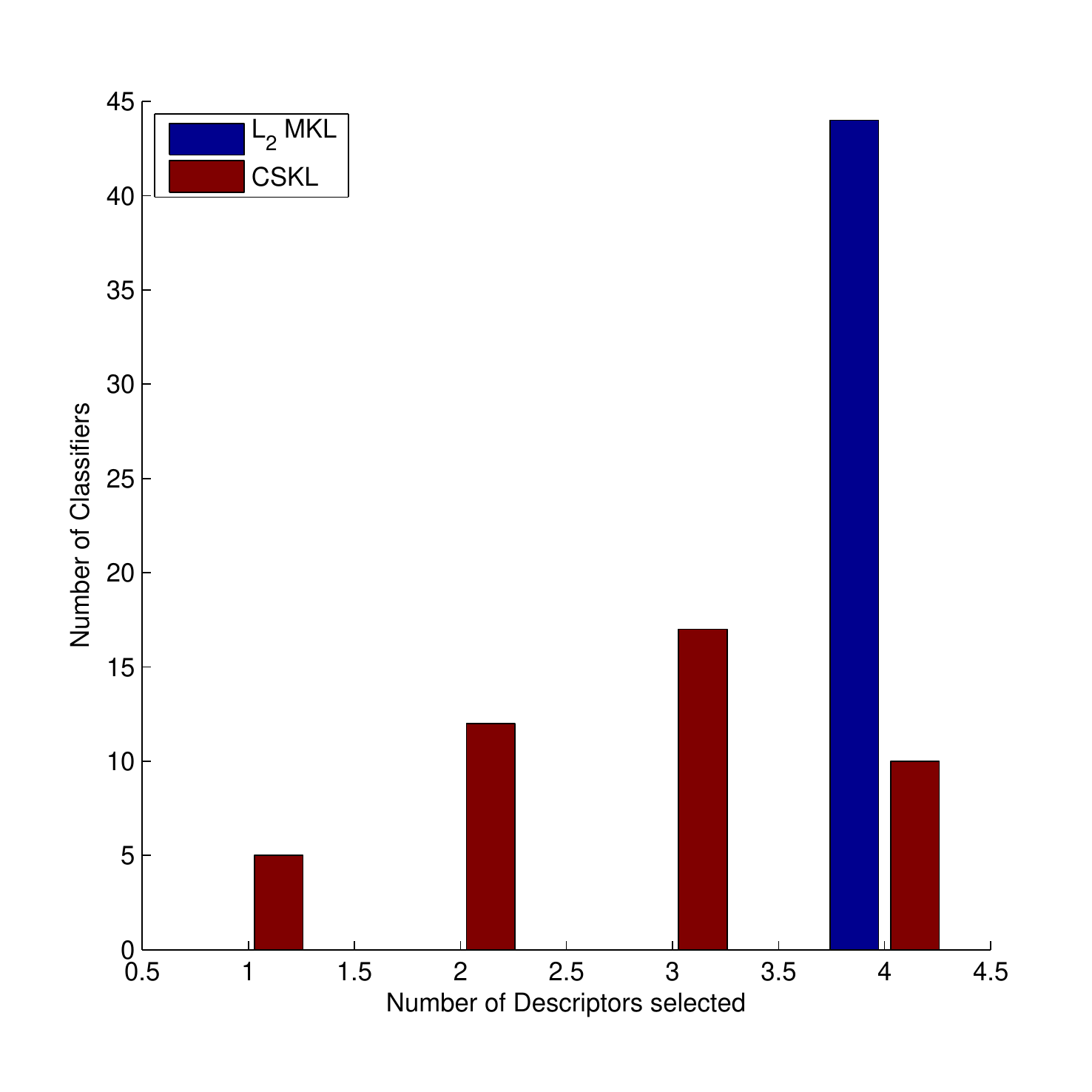}}
\caption{Histogram of number of descriptors selected where {\tt CSKL} performs better than {\tt $L_2$ MKL} in Caltech101 1-vs-Rest binary classification problems} \label{fig:Kloft_vs_BMKL_1vrest_hist_bar}
\end{figure}

\begin{figure}
\centering
\subfloat[Caltech-101]{\includegraphics[width=3in,height=3in]{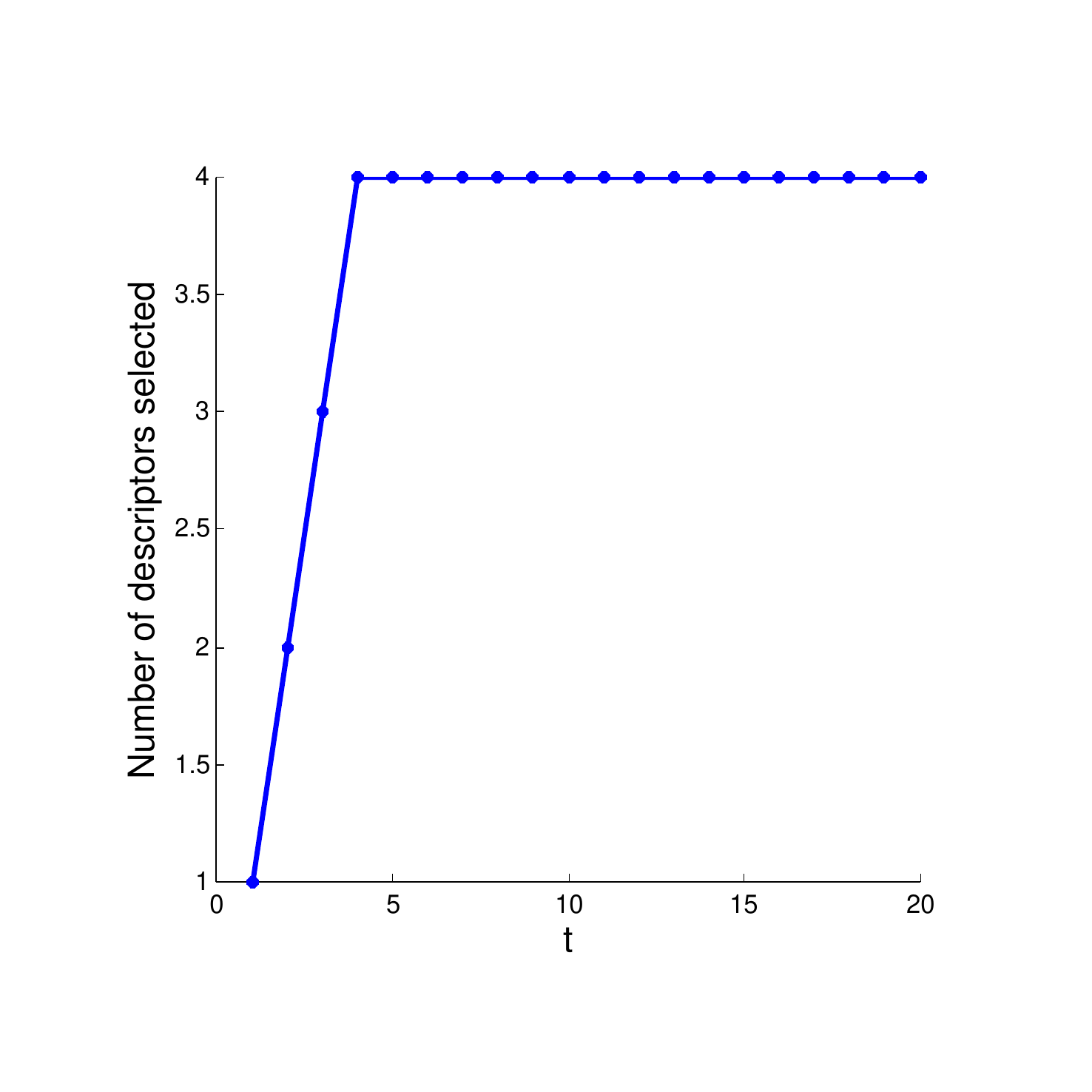}}
\caption{Number of descriptors selected as $t$ is increased} \label{fig:caltech101_desc}
\end{figure}


\section{Conclusion.}
As we have seen, both Sparse and Non-Sparse MKL have their handicaps depending on the classification problem at hand. Niehter of them are always the best. 
Also, in such problems the time taken to calculate the features is one the biggest bottlenecks. For all these reasons, a formulation with strict control of sparsity would be the best solution to have. 
One can then tune the sparsity parameter $t$ and select the best set
of kernels for any particular classification problem. We have described one such formulation in this paper along with the associated solution algorithms. We have also shown the 
superior performance of this formulation with respect to both the Sparse and Non-Sparse formulations of MKL for the application problem of Object Categorization.

\end{document}